\documentclass{article}

\usepackage{arxiv}

\usepackage[utf8]{inputenc} 
\usepackage[T1]{fontenc}    
\usepackage{hyperref}       
\usepackage{url}            
\usepackage{booktabs}       
\usepackage{amsfonts}       
\usepackage{amsmath}
\usepackage{amsthm}
\usepackage{nicefrac}       
\usepackage{float}          
\usepackage{microtype}      
\usepackage{lipsum}
\usepackage{graphicx}
\usepackage[ruled]{algorithm2e} 
\graphicspath{ {./images/} {./figures/} }

\newtheorem{defin}{Definition}[section]

\newtheorem{proposition}{Proposition}[section]

\title{Profile-Then-Reason: Bounded Semantic Complexity for Tool-Augmented Language Agents}

\author{
 Paulo Akira F. Enabe \\
    Escola Politénica\\
    University of São Paulo\\
  \texttt{paulo.enabe@usp.br} \\
}

\begin{document}
\maketitle
\begin{abstract}
  Large language model agents that use external tools are often implemented through reactive execution, in which reasoning is repeatedly recomputed after each observation, increasing latency and sensitivity to error propagation. This work introduces Profile--Then--Reason (PTR), a bounded execution framework for structured tool-augmented reasoning, in which a language model first synthesizes an explicit workflow, deterministic or guarded operators execute that workflow, a verifier evaluates the resulting trace, and repair is invoked only when the original workflow is no longer reliable. A mathematical formulation is developed in which the full pipeline is expressed as a composition of profile, routing, execution, verification, repair, and reasoning operators; under bounded repair, the number of language-model calls is restricted to two in the nominal case and three in the worst case. Experiments against a ReAct baseline on six benchmarks and four language models show that PTR achieves the pairwise exact-match advantage in 16 of 24 configurations. The results indicate that PTR is particularly effective on retrieval-centered and decomposition-heavy tasks, whereas reactive execution remains preferable when success depends on substantial online adaptation.
\end{abstract}


\section{Introduction}

Large language models have emerged as a central paradigm in modern artificial intelligence due to their ability to learn broad linguistic, semantic, and procedural regularities from large-scale text corpora. Their recent development is rooted in the Transformer architecture introduced in \cite{vaswani2017attention}, in which sequence modeling is based on self-attention rather than recurrence or convolution. This architectural shift enabled substantially improved scalability in both training and inference, thereby making it possible to train models with very large parameter counts and broad task generalization capabilities. As a consequence, contemporary language models are able not only to generate coherent natural language, but also to perform instruction following, structured prediction, code generation, tool invocation, and multi-step reasoning. These capabilities have motivated the development of agentic frameworks in which a language model is no longer used solely as a text generator, but rather as a decision-making component embedded within a larger computational pipeline.

A major step in the development of reasoning-oriented prompting was the introduction of chain-of-thought prompting in \cite{wei2022chainofthought}. In that formulation, the model is no longer prompted to produce only an input-output mapping, but instead to generate a sequence of intermediate natural-language reasoning steps leading to the final answer. The significance of this contribution lies in showing that large language models can be induced, through prompting alone, to externalize part of their reasoning process in a form that is both operationally useful and partially interpretable. More precisely, chain-of-thought prompting demonstrated that the insertion of intermediate linguistic steps can improve performance on arithmetic, commonsense, and symbolic reasoning tasks, especially for models of sufficiently large scale \cite{wei2022chainofthought,wang2022selfconsistency}. In this sense, classical chain-of-thought prompting established the basic principle that reasoning quality can be improved by structuring the output space of the model, rather than by requesting only the final prediction. At the same time, the resulting reasoning process remains internal to a single language-model trajectory and does not, by itself, define an explicit execution framework for tool use, state transitions, or bounded interaction with external operators.

A subsequent line of work extended chain-of-thought reasoning beyond the generation of intermediate steps alone and toward explicit verification-oriented deliberation. In particular, the Chain-of-Verification framework proposed in \cite{dhuliawala2023chainofverification} considers a staged procedure in which the model first generates a draft response, then formulates verification questions, answers those questions independently, and finally revises its response in light of the obtained evidence. This development is closely related to chain-of-thought prompting in that it retains the central idea that intermediate reasoning structure can improve output quality. However, its objective is different. Rather than primarily eliciting latent reasoning ability for difficult tasks, it introduces an explicit self-checking mechanism aimed at reducing factual hallucination and improving response reliability. In this sense, Chain-of-Verification may be viewed as a verification-centered extension of the broader reasoning-by-decomposition paradigm initiated by classical chain-of-thought prompting.

A further development of particular importance for agentic systems is the ReAct framework introduced in \cite{yao2023react}. In contrast to classical chain-of-thought prompting, where the reasoning process remains internal to a single generative trajectory, ReAct interleaves natural-language reasoning traces with explicit task-specific actions executed in an external environment. More precisely, the model alternates between producing thoughts, which serve to decompose goals, track progress, or reformulate plans, and actions, which query an external source or modify the environment and thereby produce new observations \cite{yao2023react}. This coupling is methodologically significant because it allows reasoning to guide action selection while simultaneously allowing environmental feedback to update subsequent reasoning. In this sense, ReAct constitutes one of the first general prompt-based formulations in which large language models are used not only to explain intermediate reasoning steps, but also to support sequential decision making through explicit interaction with tools or environments. The empirical study in the paper demonstrated that such an interleaved reasoning-and-acting process can improve performance.

Despite its conceptual appeal and broad applicability, ReAct also exhibits structural limitations that become significant in tool-intensive settings. Because reasoning traces and external actions are interleaved sequentially, the framework typically requires a new language-model inference step after each observation, so that the overall computational cost and wall-clock latency grow with the length of the trajectory. This dependence on repeated model calls is not incidental, but intrinsic to the reactive formulation itself: each newly acquired observation may alter the subsequent reasoning state, which in turn necessitates further deliberation before the next action is selected \cite{yao2023react}. As a consequence, ReAct can be inefficient on tasks whose workflow is largely predictable in advance, since repeated semantic recomputation is performed even when intermediate observations only induce local parameter adjustments rather than genuine strategy changes. The same sequential dependence may also reduce robustness, as early retrieval errors, uninformative observations, or hallucinated intermediate thoughts can propagate through later steps of the trajectory. Indeed, the ReAct study reports error modes related to hallucination, reasoning error, and non-informative search. Related analyses have shown that these limitations are not confined to the original ReAct setting, but persist more broadly in reactive agent frameworks, particularly in the form of planning failures, error propagation, and efficiency overheads \cite{xing2024understanding, lu2025exploring, fan2025sweeffi, bogavelli2026agentarch}.

Two recent works are particularly relevant in the search for alternatives to fully reactive agent execution. ReWOO \cite{xu2023rewoo} proposes to decouple reasoning from observations by introducing a modular Planner--Worker--Solver architecture. The planner generates a complete sequence of interdependent plans before any tool feedback is incorporated, the worker executes the corresponding tool calls, and the solver combines the collected evidence into a final response. The principal motivation is to avoid the prompt redundancy and repeated language-model invocations induced by observation-dependent reasoning. By contrast, LLMCompiler \cite{kim2024llmcompiler} addresses the same general inefficiency problem from a systems perspective. Its formulation introduces a function-calling planner, a task-fetching unit, and an executor that collectively transform a user query into a dependency graph of tasks and execute compatible function calls in parallel. Thus, while both methods depart from the strictly sequential thought-action-observation pattern, ReWOO does so through prior decomposition of reasoning and evidence collection, whereas LLMCompiler does so through execution-graph optimization and parallel scheduling. These developments indicate that the design of efficient agent architectures remains an active area of research, particularly at the intersection of reasoning structure, execution orchestration, and empirical evaluation \cite{besta2024graph, sun2026dare}.

Although these developments substantially improve over fully reactive execution, an important methodological gap remains. Existing alternatives primarily address inefficiency either by decoupling reasoning from tool observations or by optimizing the orchestration of function calls. In many structured analytical settings, however, the principal difficulty is not only to reduce latency or token consumption, but to determine when a workflow can be reliably synthesized in advance, when deterministic execution is sufficient, and when bounded semantic adaptation becomes necessary. This distinction is particularly important in tool-intensive domains where the workflow class is regular enough to permit prior planning, yet still sensitive to schema irregularities, diagnostic contradictions, or execution failures. Therefore, what remains needed is a framework that preserves the computational advantages of precompiled execution while retaining a controlled mechanism for limited adaptation when the initial workflow ceases to be reliable.

The objective of this work is to introduce such a framework, referred to as Profile-Then-Reason. The proposed method is a bounded execution architecture for tool-augmented language agents in structured domains. More precisely, PTR separates semantic workflow synthesis from runtime execution and final interpretation. In the initial profile stage, the language model constructs an explicit workflow together with associated assumptions, execution annotations, and uncertainty descriptors. This workflow is then executed through deterministic or guarded operators, rather than through repeated observation-dependent reasoning. A subsequent verification stage evaluates whether the realized execution trace remains sufficiently trustworthy for final interpretation, and an additional repair stage is invoked only when execution evidence indicates that the original workflow is no longer reliable. In this sense, PTR is designed to reduce semantic recomputation relative to reactive architectures while preserving a bounded form of adaptivity when deterministic execution alone is insufficient.

The remainder of this paper is organized as follows. Section~\ref{sec:ptr-framework} introduces the Profile--Then--Reason framework and develops its mathematical formulation, including the profile, routing, execution, verification, repair, and reasoning operators, together with the associated boundedness and determinism properties. Section~\ref{sec:experiments} presents the experimental study, describing the benchmark suite, evaluation metrics, and comparative results against a ReAct baseline, with particular emphasis on the interaction between task structure and architectural performance. Finally, Section~\ref{sec:conclusion} concludes the paper by summarizing the main findings, discussing the methodological scope of the proposed framework, and outlining directions for future work.

\section{The Profile-Then-Reason Framework}
\label{sec:ptr-framework}

The objective of this work is to define a structured execution framework for tool-augmented language agents in domains where the admissible workflow class is sufficiently regular. The proposed framework, referred to as Profile-Then-Reason (PTR), is based on the observation that in many high-structure tasks the semantic burden of the agent can be concentrated at the beginning and at the end of execution. More precisely, the first language-model call is used to synthesize an executable workflow together with explicit uncertainty descriptors, whereas the final language-model call is used to interpret the resulting execution trace. Between these two stages, the workflow is executed by deterministic operators, possibly enriched by local branch rules, failure-recovery rules, and a post-execution verification step. In this sense, the formulation replaces repeated interleaved reasoning and acting by a bounded execution scheme whose computational cost, execution depth, and adaptation budget are explicitly controlled.

Let $\mathcal{X}$ denote the task space, let $\mathcal{M}$ denote the metadata space, let $\mathcal{S}$ denote the execution-state space, and let $\mathcal{Y}$ denote the report space. A task instance is denoted by $x \in \mathcal{X}$, and a metadata object is denoted by $m \in \mathcal{M}$. The metadata contains the structured information required for execution, including schema descriptors, tool schemas, domain constraints, and optional historical summaries. The complete PTR map may then be viewed as an operator
\begin{equation}
\mathfrak{P} : \mathcal{X} \times \mathcal{M} \to \mathcal{Y},
\end{equation}
constructed through the successive transformations
\begin{equation}
(x,m) \longmapsto p \longmapsto \rho \longmapsto s_f \longmapsto z_f \longmapsto y,
\end{equation}
where $p$ denotes a profile object, $\rho$ denotes an execution mode, $s_f$ denotes the final execution state, $z_f$ denotes a verification object, and $y$ denotes the final report. If the verification stage indicates that the realized execution path is not sufficiently trustworthy, an additional repair stage is inserted before the final reasoning stage. Therefore, the essential distinction of PTR lies not only in reducing the number of language-model calls, but in separating semantic workflow synthesis from the deterministic mechanics of execution.

At a high level, PTR consists of three semantic stages and three deterministic stages. The semantic stages are PROFILE, optional REPAIR, and REASON, each realized by a bounded language-model call. The deterministic stages are ROUTE, EXECUTE, and VERIFY. The framework thus separates semantic workflow synthesis from deterministic execution and verification, while preserving a finite adaptation budget: at most three language-model invocations and a bounded number of tool calls per task instance.

\subsection{Mathematical formulation}

To state the formulation precisely, let each task $x \in \mathcal{X}$ be represented by a tuple
\begin{equation}
x = (q,d,c),
\end{equation}
where $q$ is the task objective, $d$ is an optional data reference, and $c$ is an optional context object. Let each metadata object $m \in \mathcal{M}$ be represented by
\begin{equation}
m = (\Sigma,\Gamma,\Delta,H),
\end{equation}
where $\Sigma$ is a schema descriptor, $\Gamma$ is a finite catalog of available tools, $\Delta$ is a collection of domain constraints and execution policies, and $H$ is an optional historical summary used for calibration. Each tool $\tau \in \Gamma$ is modeled as a tuple
\begin{equation}
\tau = (\mathcal{U}_{\tau}, \mathcal{V}_{\tau}, F_{\tau}),
\end{equation}
where $\mathcal{U}_{\tau}$ is the parameter domain, $\mathcal{V}_{\tau}$ is the output space, and
\begin{equation}
F_{\tau} : \mathcal{U}_{\tau} \times \mathcal{S} \to \mathcal{V}_{\tau} \times \mathcal{S}
\end{equation}
is the corresponding state-transition operator. The dependence on the current state is made explicit because tool execution may consume prior observations, intermediate identifiers, or derived state variables produced earlier in the workflow.

A workflow is defined as a finite sequence of annotated tool invocations. More precisely, let
\begin{equation}
\mathcal{W} = \bigcup_{L \geq 1} \mathcal{W}_L,
\end{equation}
where $\mathcal{W}_L$ denotes the set of workflows of length $L$. An element $w \in \mathcal{W}_L$ is written as
\begin{equation}
w = \left( (\tau_{\ell}, u_{\ell}, \eta_{\ell}) \right)_{\ell=1}^{L},
\end{equation}
where $\tau_{\ell} \in \Gamma$ is a tool identifier, $u_{\ell} \in \mathcal{U}_{\tau_{\ell}}$ is a parameter object, and $\eta_{\ell}$ is an annotation object containing execution-relevant information such as auto-resolution markers, branch conditions, and recovery hints. The execution state is modeled as an element $s \in \mathcal{S}$ of the form
\begin{equation}
s = (r,e,b,f,\zeta),
\end{equation}
where $r$ is the result store, $e$ is the execution trace, $b$ is the branch-activation log, $f$ is the failure log, and $\zeta$ is an auxiliary environment object. The result store is a finite map containing all successful tool outputs and all derived execution quantities needed by later stages. The trace records the chronological sequence of tool calls and observations. The branch and failure logs record deterministic modifications and failed executions, respectively. This state representation is important because it makes the execution phase an explicit dynamical system on a finite structured state space.

The first language-model stage is the profile operator. Its role is to synthesize an executable workflow together with explicit descriptors of uncertainty and fragility. Formally, let
\begin{equation}
\Pi : \mathcal{X} \times \mathcal{M} \to \mathcal{P},
\end{equation}
where $\mathcal{P}$ is the space of admissible profile objects. A profile object is represented by
\begin{equation}
p = (w,\gamma,A,G,C,B,\Xi),
\end{equation}
where $w \in \mathcal{W}$ is the proposed workflow. The remaining components are partitioned into two groups. The epistemic descriptors, consisting of $\gamma \in [0,1]$ (plan confidence), $A$ (assumptions), and $G$ (fragile points), characterize the planner's uncertainty about the validity of the proposed workflow. The control descriptors, consisting of $C$ (replan conditions), $B$ (branch rules), and $\Xi$ (auxiliary annotations), encode the deterministic adaptation policy that governs execution. The mathematical role of $\Pi$ is to map a task and its structured context into a machine-usable execution policy. Its computational purpose is to concentrate semantic planning in a single synthesis step. An admissible profile must satisfy at least the following conditions: every referenced tool belongs to the catalog $\Gamma$, every parameter object is schema-compatible or marked for deterministic resolution, every branch rule is evaluable on the execution state, and every replan condition is expressible as a verifiable state predicate. These admissibility requirements are essential for computability, since without them the downstream execution operator would not be well posed.

Once a profile has been generated, the framework computes a deterministic risk score in order to select the execution mode. Let
\begin{equation}
R : \mathcal{M} \times \mathcal{P} \to [0,1]
\end{equation}
denote the risk functional. The quantity $R(m,p)$ is intended to summarize the degree to which the planned workflow is fragile with respect to the available metadata and the planner output. The formulation adopted in the present work uses a five-component decomposition. Let
\begin{equation}
c_1 = c_{\mathrm{schema}}(m,p), \qquad
c_2 = c_{\mathrm{planning}}(p), \qquad
c_3 = c_{\mathrm{method}}(m,p), \qquad
c_4 = c_{\mathrm{scale}}(m), \qquad
c_5 = c_{\mathrm{history}}(m),
\end{equation}
with $c_i \in [0,1]$ for $i=1,\dots,5$. The total risk is then defined as the convex combination
\begin{equation}
R(m,p) = \sum_{i=1}^{5} w_i c_i,
\end{equation}
subject to
\begin{equation}
w_i > 0, \qquad \sum_{i=1}^{5} w_i = 1.
\end{equation}
Thus,
\begin{equation}
0 \leq R(m,p) \leq 1.
\end{equation}
This construction is useful for both analysis and implementation. From the analytical viewpoint, it gives a bounded scalar functional with interpretable component contributions. From the computational viewpoint, it defines a computationally inexpensive and auditable routing criterion.

Given two thresholds $0 < \theta_1 < \theta_2 < 1$, let $\mathcal{R}_{\mathrm{mode}} = \{\mathrm{pure},\, \mathrm{guarded},\, \mathrm{repair\_eligible}\}$ denote the finite set of admissible execution modes. The routing map
\begin{equation}
\mathcal{Q} : \mathcal{M} \times \mathcal{P} \to \mathcal{R}_{\mathrm{mode}}
\end{equation}
is defined by
\begin{equation}
\mathcal{Q}(m,p) =
\begin{cases}
\mathrm{pure}, & \text{if } R(m,p) < \theta_1, \\
\mathrm{guarded}, & \text{if } \theta_1 \leq R(m,p) < \theta_2, \\
\mathrm{repair\_eligible}, & \text{if } R(m,p) \geq \theta_2.
\end{cases}
\end{equation}
The mode $\mathrm{pure}$ corresponds to direct deterministic execution. The mode $\mathrm{guarded}$ activates branch rules and local deterministic adaptations. The mode $\mathrm{repair\_eligible}$ activates the same guarded mechanism while also marking the run as structurally likely to require repair if the verifier later detects insufficient trust. It is worth mentioning that the router is entirely deterministic. Therefore, once the profile object is fixed, the routing decision is uniquely determined. This property is important for reproducibility and for post hoc analysis of execution behavior.

Given an initial state $s_0 \in \mathcal{S}$, a workflow $w \in \mathcal{W}$, and an execution mode $\rho \in \mathcal{R}_{\mathrm{mode}}$, the execution stage is defined by the operator
\begin{equation}
\mathcal{E} : \mathcal{W} \times \mathcal{R}_{\mathrm{mode}} \times \mathcal{S} \to \mathcal{S}.
\end{equation}
Let
\begin{equation}
w = \left( (\tau_{\ell}, u_{\ell}, \eta_{\ell}) \right)_{\ell=1}^{L}.
\end{equation}
The state is advanced recursively by
\begin{equation}
s_{\ell} = \mathcal{E}_{\ell}(s_{\ell-1};\tau_{\ell},u_{\ell},\eta_{\ell},\rho),
\qquad \ell = 1,\dots,L,
\end{equation}
with $s_f = s_L$. The local operator $\mathcal{E}_{\ell}$ is itself composed of a sequence of deterministic state-to-state transformations. First, parameters are resolved. Second, local branch rules are evaluated if the execution mode allows them. Third, the tool transition is executed. Fourth, if an error occurs, deterministic recovery rules are applied if available. Finally, the resulting output and all relevant execution metadata are written into the state. More precisely, let each component be a map $\Phi_{\ell}^{(\cdot)} : \mathcal{S} \to \mathcal{S}$, implicitly parameterized by the step data $(\tau_{\ell}, u_{\ell}, \eta_{\ell})$ and the execution mode $\rho$. The composition may then be written as
\begin{equation}
\mathcal{E}_{\ell}
=
\Phi_{\ell}^{\mathrm{store}}
\circ
\Phi_{\ell}^{\mathrm{recover}}
\circ
\Phi_{\ell}^{\mathrm{exec}}
\circ
\Phi_{\ell}^{\mathrm{branch}}
\circ
\Phi_{\ell}^{\mathrm{resolve}}.
\end{equation}
The operator $\Phi_{\ell}^{\mathrm{resolve}}$ eliminates unresolved symbolic parameters, $\Phi_{\ell}^{\mathrm{branch}}$ evaluates local branch predicates and applies parameter modifications when the execution mode permits, $\Phi_{\ell}^{\mathrm{exec}}$ invokes the tool transition $F_{\tau_{\ell}}$, $\Phi_{\ell}^{\mathrm{recover}}$ applies local retry rules when the tool call fails, and $\Phi_{\ell}^{\mathrm{store}}$ updates the state. In pure mode, $\Phi_{\ell}^{\mathrm{branch}}$ reduces to the identity. Therefore, the execution stage has the structure of a finite deterministic transition system once the profile object has been fixed.

The resolution stage deserves explicit discussion because it is one of the main mechanisms through which PTR removes unnecessary language-model calls. Let $\mathcal{A}$ denote the family of auto-resolution rules. Each rule is a map
\begin{equation}
a : \mathcal{S} \to \mathcal{V}_a,
\end{equation}
where $\mathcal{V}_a$ is the value space of the associated parameter slot. If a parameter is marked as \texttt{auto}, it is replaced by the output of the corresponding deterministic rule evaluated on the current state. Thus the resolved parameter object may be written as
\begin{equation}
u_{\ell}^{\mathrm{res}} = \Lambda_{\mathrm{auto}}(u_{\ell}, s_{\ell-1}).
\end{equation}
This construction is mathematically relevant because it guarantees that the actual tool invocation is always performed with a concrete admissible parameter object. It is computationally relevant because it moves local parameter selection from the language model to a low-overhead deterministic mechanism.

In guarded execution modes, the framework also permits branch rules. A branch rule is modeled as a pair
\begin{equation}
(\varphi,\psi),
\end{equation}
where
\begin{equation}
\varphi : \mathcal{S} \to \{0,1\}
\end{equation}
is a predicate and
\begin{equation}
\psi : \mathcal{U}_{\tau_{\ell}} \times \mathcal{S} \to \mathcal{U}_{\tau_{\ell}}
\end{equation}
is a parameter-modification map. If $\varphi(s_{\ell-1}) = 1$, then the pre-execution parameter object is updated according to
\begin{equation}
u_{\ell}^{\mathrm{br}} = \psi(u_{\ell}^{\mathrm{res}}, s_{\ell-1}),
\end{equation}
whereas otherwise $u_{\ell}^{\mathrm{br}} = u_{\ell}^{\mathrm{res}}$. The intended use of branch rules is local adaptation within a fixed method family. Thus, they are suitable for changes such as selecting a robust variant, widening a threshold, or switching between predetermined subprocedures, but they are not intended for conceptual redesign of the workflow. This distinction is crucial. Local variations remain in the deterministic layer, whereas global reinterpretation is reserved for the optional repair stage.

When a tool call fails, the framework attempts deterministic recovery before declaring structural failure. Let $\mathcal{E}_{\mathrm{err}}$ denote the error space. A recovery rule is a map
\begin{equation}
\chi : \mathcal{E}_{\mathrm{err}} \times \mathcal{U}_{\tau_{\ell}} \times \mathcal{S} \to \mathcal{U}_{\tau_{\ell}}.
\end{equation}
If a tool invocation fails with error $\varepsilon_{\ell}$ and an admissible recovery rule exists, the framework retries the same step with
\begin{equation}
u_{\ell}^{\mathrm{retry}} = \chi(\varepsilon_{\ell}, u_{\ell}^{\mathrm{br}}, s_{\ell-1}).
\end{equation}
A failure is called soft if the execution policy admits such a deterministic recovery or continuation. A failure is called hard if the workflow is no longer admissible under the structural constraints of the framework. This distinction is important because soft failures preserve the possibility of deterministic completion, whereas hard failures contribute directly to repair recommendation or to low trust in the final verification stage.

After the execution phase, the framework evaluates the structural quality of the realized trace by means of a verifier. The verifier does not attempt to reinterpret the semantics of the task. Instead, it assesses whether the produced trace is sufficiently informative and sufficiently stable to support the final reasoning stage. Let
\begin{equation}
\mathcal{V} : \mathcal{S} \times \mathcal{M} \times \mathcal{P} \to \mathcal{Z}
\end{equation}
denote the verifier operator, where $\mathcal{Z}$ is the space of verification objects. An element $z \in \mathcal{Z}$ is represented by
\begin{equation}
z = (\kappa,\sigma,I,\eta,\xi),
\end{equation}
where $\kappa \in [0,1]$ is the trust score, $\sigma$ is a status label, $I$ is a finite set of issues, $\eta$ is a finite set of reasoning flags, and $\xi \in \{0,1\}$ is the repair recommendation indicator. The framework does not prescribe a unique trust functional; any deterministic map $\kappa : \mathcal{S} \to [0,1]$ is admissible provided that it is monotonically non-increasing with respect to execution degradation. One concrete instance, used in the experiments of this work, takes the penalty form
\begin{equation}
\kappa(s)
=
\max \left\{
0,\,
1
-
\alpha_{\mathrm{fail}} n_{\mathrm{fail}}
-
\alpha_{\mathrm{empty}} n_{\mathrm{empty}}
-
\alpha_{\mathrm{thin}} n_{\mathrm{thin}}
-
\alpha_{\mathrm{branch}} n_{\mathrm{branch}}
-
\alpha_{\mathrm{diag}} \delta_{\mathrm{diag}}
\right\},
\end{equation}
where $n_{\mathrm{fail}}$, $n_{\mathrm{empty}}$, $n_{\mathrm{thin}}$, and $n_{\mathrm{branch}}$ are structural counters extracted from the execution trace, $\delta_{\mathrm{diag}}$ is an additional diagnostic penalty, and all coefficients are nonnegative and domain-dependent. The essential requirement is that $\kappa$ is a deterministic functional of the realized state, so that the verifier provides a reproducible mechanism for deciding whether the current execution path is adequate for interpretation.

Given a repair threshold $\theta_{\mathrm{rep}} \in (0,1)$, the repair recommendation indicator is defined by
\begin{equation}
\xi =
\begin{cases}
1, & \text{if } \kappa < \theta_{\mathrm{rep}} \text{ or a hard failure occurred}, \\
0, & \text{otherwise}.
\end{cases}
\end{equation}
If $\xi=0$, the framework proceeds directly to final reasoning. If $\xi=1$, an additional bounded repair stage is activated. The repair operator is defined by
\begin{equation}
\mathcal{K} : \mathcal{X} \times \mathcal{M} \times \mathcal{P} \times \mathcal{S} \times \mathcal{Z} \to \mathcal{P}_{\mathrm{patch}},
\end{equation}
where $\mathcal{P}_{\mathrm{patch}}$ is the space of admissible patched profiles. The mathematical role of $\mathcal{K}$ is not to produce a final answer, but to modify the workflow specification in light of the realized failures and diagnostics. In the present framework, repair is strictly bounded.

\begin{defin}[Bounded repair]
The PTR framework is said to satisfy bounded repair if at most one repair invocation is permitted for each task instance.
\end{defin}

\begin{proposition}[Bounded semantic complexity]
\label{prop:bounded-complexity}
Under bounded repair, if each workflow step admits at most $N_{\mathrm{rec}}$ deterministic retries, then for each task instance: (i) the total number of language-model invocations belongs to $\{2,3\}$, and (ii) the total number of tool invocations satisfies
\begin{equation}
N_{\mathrm{tool}} \leq (L + L^{\sharp})(1 + N_{\mathrm{rec}}),
\end{equation}
where $L$ is the length of the original workflow and $L^{\sharp}$ is the length of the repaired workflow, with $L^{\sharp} = 0$ in the absence of repair.
\end{proposition}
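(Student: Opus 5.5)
The proof goes by direct accounting over the pipeline $(x,m)\mapsto p\mapsto\rho\mapsto s_f\mapsto z_f\mapsto y$. I first separate the stages that call the language model from those that do not. By construction, $\mathcal{Q}$ (routing), $\mathcal{E}$ (execution) and $\mathcal{V}$ (verification) are deterministic maps. The components $\Phi_\ell^{\mathrm{resolve}}$, $\Phi_\ell^{\mathrm{branch}}$, $\Phi_\ell^{\mathrm{recover}}$, $\Phi_\ell^{\mathrm{store}}$, the auto-resolution rules $a$, the branch rules $(\varphi,\psi)$ and the recovery rules $\chi$ are all deterministic state maps. The trust score $\kappa$ and the indicator $\xi$ are deterministic functionals as well. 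Hence none of these stages makes a language-model call. The only stages that do are $\Pi$ (PROFILE), $\mathcal{K}$ (REPAIR) and REASON, and each is a single bounded call.

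For part (i), I would split on the value of $\xi$. Every run executes exactly one PROFILE call and exactly one REASON call, since REASON produces $y$ whether or not repair occurs. If $\xi=0$, the run proceeds directly to reasoning, giving exactly $2$ calls. If $\xi=1$, the repair operator $\mathcal{K}$ is invoked. The Definition of bounded repair permits at most one such invocation, so even if the patched workflow is re-verified and again flagged, no further call to $\mathcal{K}$ occurs. This gives exactly $3$ calls. The count is therefore in $\{2,3\}$.

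For part (ii), I would bound tool calls per workflow step. In the recursion $s_\ell=\mathcal{E}_\ell(s_{\ell-1};\dots)$, only $\Phi_\ell^{\mathrm{exec}}$ invokes $F_{\tau_\ell}$, and it does so once. Each subsequent invocation comes from $\Phi_\ell^{\mathrm{recover}}$, which issues a retry with $u_\ell^{\mathrm{retry}}$, and the hypothesis caps these at $N_{\mathrm{rec}}$. The branch and resolution maps only modify parameters and call no tool. So step $\ell$ contributes at most $1+N_{\mathrm{rec}}$ invocations, and executing a workflow of length $L$ costs at most $L(1+N_{\mathrm{rec}})$. Under repair, the patched profile's workflow of length $L^\sharp$ is executed once, contributing at most $L^\sharp(1+N_{\mathrm{rec}})$. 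Bounded repair guarantees no third execution. Summing the two phases gives the stated bound, with $L^\sharp=0$ when $\xi=0$.

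The main obstacle is making precise that the workflow length is fixed during execution, i.e., that no step spawns additional tool calls. In particular, a branch rule that ``switches between predetermined subprocedures'' might appear to add steps. I would handle this by invoking the formal definitions. The map $\psi$ has codomain $\mathcal{U}_{\tau_\ell}$, so it changes only the parameters of the current step and never the tool sequence. The recursion runs over exactly $\ell=1,\dots,L$. Any subprocedure change is therefore a parameter change within a single invocation of $F_{\tau_\ell}$. A secondary point is to state explicitly that the repaired workflow replaces, rather than extends, the remaining execution, so that its cost is counted only as $L^\sharp(1+N_{\mathrm{rec}})$.
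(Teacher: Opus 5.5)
Your proposal is correct and is essentially the paper's own argument: exactly one PROFILE call and one REASON call, at most one REPAIR call by bounded repair, and at most $1+N_{\mathrm{rec}}$ attempts for each of the $L+L^{\sharp}$ executed steps. Your extra check that $\psi$ and $\chi$ map into $\mathcal{U}_{\tau_\ell}$ (so no step spawns new tool calls) makes explicit what the paper leaves implicit; one small point is that verification happens only after $w$ has run to completion, so $w^{\sharp}$ is run in addition to all of $w$ rather than replacing a ``remaining'' part, which is exactly why the two terms add.
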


\begin{proof}
The framework always performs exactly one profile call and one reasoning call. By the bounded-repair condition, at most one additional repair call is permitted, giving at most three language-model invocations. The original and repaired workflows contribute at most $L$ and $L^{\sharp}$ steps, respectively, and each step may be attempted at most $1 + N_{\mathrm{rec}}$ times due to deterministic recovery.
\end{proof}

\begin{proposition}[Deterministic downstream execution]
\label{prop:deterministic-downstream}
Fix a task instance $(x,m)$ and an admissible profile object $p$. If the routing map $\mathcal{Q}$, the tool transitions $\{F_{\tau}\}_{\tau \in \Gamma}$, the auto-resolution rules $\mathcal{A}$, the branch rules $B$, the recovery rules in $\Delta$, and the verifier $\mathcal{V}$ are all deterministic, then every stage of PTR after profiling is deterministic.
\end{proposition}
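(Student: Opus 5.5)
The plan is to argue by composition: a composition of deterministic maps is deterministic, so it suffices to check that each stage after $\Pi$ is a well-defined single-valued function of its inputs. With $(x,m)$ and $p=(w,\gamma,A,G,C,B,\Xi)$ fixed, I will follow the pipeline $p \mapsto \rho \mapsto s_f \mapsto z_f$, then the optional repair branch, and finally the reasoning input. At each arrow I will identify the map involved and the hypothesis that makes it deterministic.

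\textbf{Routing.} $R(m,p)=\sum_i w_i c_i$ is a fixed convex combination of component functions $c_i$ of $(m,p)$. Given fixed $(m,p)$ and fixed thresholds $\theta_1<\theta_2$, the case split defining $\mathcal{Q}$ selects a unique $\rho$. I take the $c_i$ to be deterministic as part of the hypothesis that $\mathcal{Q}$ is deterministic.

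\textbf{Execution.} I will induct on $\ell=1,\dots,L$, showing that $s_\ell$ is a function of $(s_0,w,\rho)$. The inductive step uses the factorization $\mathcal{E}_\ell=\Phi^{\mathrm{store}}_\ell\circ\Phi^{\mathrm{recover}}_\ell\circ\Phi^{\mathrm{exec}}_\ell\circ\Phi^{\mathrm{branch}}_\ell\circ\Phi^{\mathrm{resolve}}_\ell$ and checks each factor in turn.
\begin{itemize}
\item $\Phi^{\mathrm{resolve}}_\ell$ is deterministic because $u^{\mathrm{res}}_\ell=\Lambda_{\mathrm{auto}}(u_\ell,s_{\ell-1})$ is built from the rules $a\in\mathcal{A}$.
\item $\Phi^{\mathrm{branch}}_\ell$ is the identity in pure mode. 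Otherwise it is determined by evaluating the predicates $\varphi$ and maps $\psi$ of $B$ on $s_{\ell-1}$.
\item $\Phi^{\mathrm{exec}}_\ell$ is deterministic because each $F_\tau$ is.
\item $\Phi^{\mathrm{recover}}_\ell$ is deterministic because both the error $\varepsilon_\ell$ and the retry parameters $\chi(\varepsilon_\ell,u^{\mathrm{br}}_\ell,s_{\ell-1})$ are functions of already-determined data. The retry loop is finite, so the count $N_{\mathrm{rec}}$ from Proposition~\ref{prop:bounded-complexity} applies.
\item $\Phi^{\mathrm{store}}_\ell$ is a bookkeeping update of $(r,e,b,f,\zeta)$.
\end{itemize}
The induction then gives that $s_f=\mathcal{E}(w,\rho,s_0)$ is determined, provided $s_0$ is fixed as a function of $(x,m)$.

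\textbf{Verification.} $z_f=\mathcal{V}(s_f,m,p)$ is determined by hypothesis. In particular, the trust score $\kappa$ and the indicator $\xi$, which thresholds $\kappa$ at $\theta_{\mathrm{rep}}$ together with the hard-failure flag read from $f$, are determined. So the decision whether repair occurs is itself deterministic.

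\textbf{Main obstacle: the repair branch and the reasoning stage.} This is where care is needed. These stages are language-model calls, so "deterministic" cannot mean that their outputs are fixed. I will interpret the claim, consistently with the statement's emphasis on the deterministic stages ROUTE, EXECUTE and VERIFY, in two parts:
\begin{itemize}
\item The \emph{inputs} to the repair and reasoning calls are deterministic functions of $(x,m,p)$.
\item All \emph{non-semantic} stages remain deterministic relative to whatever patched profile $\mathcal{K}$ returns.
\end{itemize}
Concretely, given the patched profile $p^\sharp$, rerunning the argument above with $p^\sharp$ in place of $p$ shows that the second routing, execution and verification are determined by $p^\sharp$. Bounded repair guarantees this happens at most once, so there is no unbounded recursion. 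I will state this qualification explicitly, so that the proposition is read as "deterministic as a function of the profile (and, if repair fires, of the patched profile)". Under that reading the conclusion follows by composing the finitely many deterministic maps.
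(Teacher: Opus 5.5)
Your proposal is correct and follows the same route as the paper's proof: $\rho=\mathcal{Q}(m,p)$ is determined, each $\mathcal{E}_\ell$ is a composition of the five deterministic factors so $s_f$ is fixed by $s_0$, $w$ and $\rho$, and $z=\mathcal{V}(s_f,m,p)$ is deterministic by hypothesis. Your step-by-step induction, and your explicit caveat that REPAIR and REASON are language-model calls so only their inputs (and post-repair execution relative to $p^\sharp$) are deterministic, spell out what the paper's one-line conclusion ``uniquely determined once the profile $p$ is fixed'' leaves implicit.
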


\begin{proof}
The routing decision $\rho = \mathcal{Q}(m,p)$ is a deterministic function of $(m,p)$. Each execution step $\mathcal{E}_{\ell}$ is a composition of deterministic maps $\Phi_{\ell}^{\mathrm{resolve}}$, $\Phi_{\ell}^{\mathrm{branch}}$, $\Phi_{\ell}^{\mathrm{exec}}$, $\Phi_{\ell}^{\mathrm{recover}}$, and $\Phi_{\ell}^{\mathrm{store}}$, so the final execution state $s_f$ is uniquely determined by the initial state $s_0$ and the workflow $w$. The verification object $z = \mathcal{V}(s_f, m, p)$ is deterministic by assumption. Hence every downstream stage is uniquely determined once the profile $p$ is fixed.
\end{proof}

The final stage is the reasoning operator
\begin{equation}
\mathcal{G} : \mathcal{X} \times \mathcal{M} \times \mathcal{S} \times \mathcal{Z} \to \mathcal{Y}.
\end{equation}
Its purpose is to transform the final state and the verification object into a report that addresses the original task. The reasoning operator is constrained by the realized evidence. In particular, substantive claims must be grounded in the contents of the result store, and trust flags and diagnostic caveats must be propagated whenever relevant. Therefore, the final language-model call is not a new planning stage, but an interpretation stage acting on a fixed verified trace.

The complete PTR framework is thus defined by composition. First, the profile object is generated by $p=\Pi(x,m)$. Second, the route is computed by $\rho=\mathcal{Q}(m,p)$. Third, the workflow is executed to obtain a state $s=\mathcal{E}(w,\rho,s_0)$. Fourth, the verifier produces $z=\mathcal{V}(s,m,p)$. Fifth, if repair is recommended, a patched profile is obtained through $p^{\sharp}=\mathcal{K}(x,m,p,s,z)$, followed by re-execution and re-verification. Finally, the report is generated as
\begin{equation}
y = \mathcal{G}(x,m,s_f,z_f).
\end{equation}
This formulation is intrinsically discrete, since workflow length, retry counts, and repair budget are all finite. Nevertheless, the scalar risk and trust functionals admit a continuous analysis in the sense that they are mappings into compact intervals and can therefore be studied through standard monotonicity and sensitivity arguments.

Several additional structural properties follow directly from the construction. Proposition~\ref{prop:deterministic-downstream} establishes that all stages after profiling are deterministic under the stated assumptions, and Proposition~\ref{prop:bounded-complexity} guarantees finite execution depth. In addition, the risk functional is monotone with respect to each of its components, since
\begin{equation}
\frac{\partial R}{\partial c_i} = w_i > 0,
\end{equation}
for $i=1,\dots,5$. These properties do not by themselves imply semantic correctness, but they do establish computability, bounded execution depth, and an explicit separation between semantic synthesis and deterministic execution.

From an implementation standpoint, the semantic stages are isolated to the profile and reasoning operators, and optionally to the repair operator. All other components are deterministic and therefore computationally inexpensive, auditable, and straightforward to instrument. In this sense, the framework combines three desirable properties. It reduces the expected number of language-model calls relative to reactive execution, it improves observability because execution is represented explicitly as structured state transitions rather than as an implicit conversational trace, and it improves robustness because many local adaptations are handled by deterministic rules. The methodological scope of the framework is nevertheless limited. PTR is appropriate when the workflow family is sufficiently structured and when mid-execution adaptations can be encoded locally. It is not intended as a universal replacement for reactive agents in fully exploratory settings. Rather, the proposed method should be understood as a bounded execution framework for structured tool use.

\subsection{Implementation discussion}

The mathematical formulation presented above admits a direct algorithmic realization. Figure~\ref{fig:ptr-dataflow} illustrates the high-level information flow through the PTR pipeline, from the initial task and metadata inputs through the six phases to the final report. Algorithm~\ref{alg:ptr} then presents the complete execution procedure using the operators and notation established in the preceding sections. The following discussion traces the algorithm from input to output, emphasizing the correspondence between the formal operators and their procedural implementation and highlighting the design decisions that govern each phase.

\begin{figure}[H]
\centering
\includegraphics[width=0.25\textwidth]{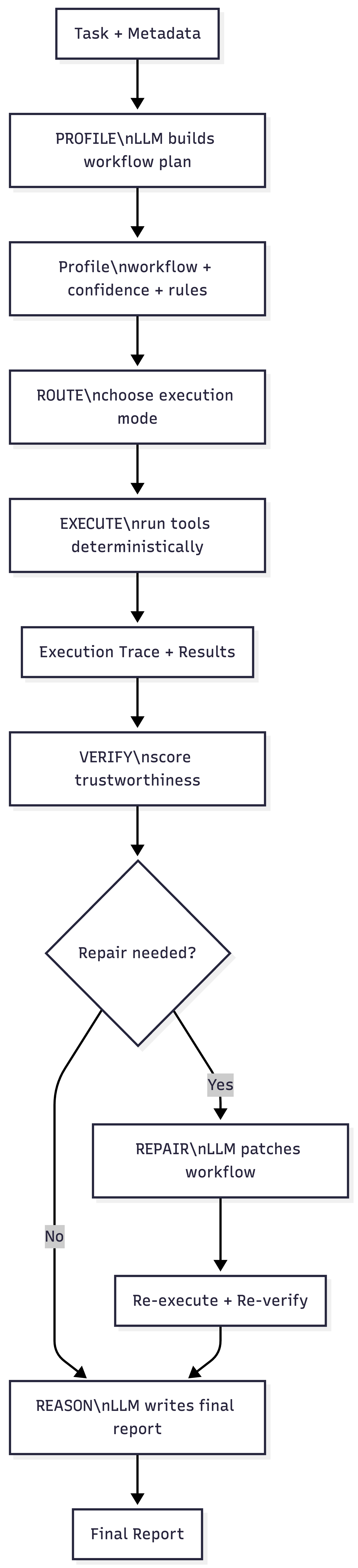}
\caption{Schematic representation of the PTR execution pipeline. Rectangular nodes denote state objects or operator applications, whereas the decision node denotes the deterministic repair trigger induced by the verification indicator $\xi$. The semantic stages PROFILE, REPAIR, and REASON are separated by deterministic routing, execution, and verification stages. The repair branch is activated at most once per task instance.}
\label{fig:ptr-dataflow}
\end{figure}

\begin{algorithm}[H]
\DontPrintSemicolon
\caption{Profile-Then-Reason (PTR) execution}\label{alg:ptr}
\KwIn{Task $x \in \mathcal{X}$,\; metadata $m \in \mathcal{M}$,\; tool catalog $\Gamma$,\; language model $\mathcal{L}$,\; rule set $\Delta$,\; budget $B$}
\KwOut{Report $y \in \mathcal{Y}$}
\BlankLine
\tcc{Phase 1: PROFILE (LLM call \#1)}
$p = (w,\gamma,A,G,C,B_r,\Xi) \leftarrow \Pi(x,m)$ via $\mathcal{L}$\;
$B.\mathrm{record}(\mathrm{cost}(p))$\;
\BlankLine
\tcc{Phase 1.5: ROUTE (deterministic)}
$R \leftarrow \sum_{i=1}^{5} w_i\, c_i(m,p)$\;
$\rho \leftarrow \mathcal{Q}(m,p)$ \tcp*{$\rho \in \mathcal{R}_{\mathrm{mode}}$}
\BlankLine
\tcc{Phase 2: EXECUTE (deterministic loop)}
$s \leftarrow s_0$\;
\For{$\ell = 1,\dots,|w|$}{
  $u_{\ell}^{\mathrm{res}} \leftarrow \Lambda_{\mathrm{auto}}(u_{\ell},\, s)$ \tcp*{resolve symbolic params}
  \uIf{$\rho \neq \mathrm{pure}$ \normalfont{and} $\exists\,(\varphi_j,\psi_j) \in B_r\!: \varphi_j(s)=1$}{
    $u_{\ell}^{\mathrm{br}} \leftarrow \psi_j(u_{\ell}^{\mathrm{res}},\, s)$ \tcp*{branch rule fires}
  }
  \lElse{$u_{\ell}^{\mathrm{br}} \leftarrow u_{\ell}^{\mathrm{res}}$}
  $(v_{\ell},\, s) \leftarrow F_{\tau_{\ell}}(u_{\ell}^{\mathrm{br}},\, s)$ \tcp*{tool transition}
  \If{$F_{\tau_{\ell}}$ failed \normalfont{and} $\exists\,\chi \in \Delta$ matching $\varepsilon_{\ell}$}{
    \For{$k = 1,\dots,N_{\mathrm{rec}}$}{
      $u_{\ell}^{\mathrm{retry}} \leftarrow \chi(\varepsilon_{\ell},\, u_{\ell}^{\mathrm{br}},\, s)$\;
      $(v_{\ell},\, s) \leftarrow F_{\tau_{\ell}}(u_{\ell}^{\mathrm{retry}},\, s)$\;
      \lIf{\normalfont{success}}{\textbf{break}}
    }
  }
}
$s_f \leftarrow s$\;
\BlankLine
\tcc{Phase 2.5: VERIFY (deterministic)}
$z = (\kappa,\sigma,I,\eta,\xi) \leftarrow \mathcal{V}(s_f,m,p)$\;
\BlankLine
\tcc{Phase 3: REPAIR (optional; at most one LLM call)}
\If{$\xi = 1$}{
  $p^{\sharp} \leftarrow \mathcal{K}(x,m,p,s_f,z)$ via $\mathcal{L}$\;
  $B.\mathrm{record}(\mathrm{cost}(p^{\sharp}))$\;
  Re-execute Phase 2 with workflow $w^{\sharp}$ from $p^{\sharp}$; update $s_f$\;
  $z \leftarrow \mathcal{V}(s_f,m,p^{\sharp})$ \tcp*{re-verify}
}
\BlankLine
\tcc{Final Phase: REASON (LLM call \#2 or \#3)}
$y \leftarrow \mathcal{G}(x,m,s_f,z)$ via $\mathcal{L}$\;
$B.\mathrm{record}(\mathrm{cost}(y))$\;
\Return{$y$}
\end{algorithm}

The procedure begins with the profile phase, shown in the first block of Algorithm~\ref{alg:ptr}. Given a task $x$ and metadata $m$, the profile operator $\Pi$ is realized by a single call to the language model $\mathcal{L}$. The prompt supplied to $\mathcal{L}$ is constructed from the task description, the schema descriptor $\Sigma$, the tool catalog $\Gamma$, the domain constraints $\Delta$, and, when available, the historical summary $H$. The language model is instructed to return a structured object encoding the full profile $p = (w, \gamma, A, G, C, B_r, \Xi)$. This output is then parsed and validated: every tool reference must belong to $\Gamma$, every parameter must be schema-compatible or marked for auto-resolution, and every branch rule must be expressible in the grammar accepted by the runtime evaluator. If parsing fails, one retry with an error-correcting prompt is permitted before the run is aborted. The cost of this call is recorded in the budget $B$, and execution proceeds only if the budget constraint is not violated. It is worth noting that the profile phase is the only stage in which the language model performs unconstrained generation. All subsequent deterministic stages either execute mechanically or, in the case of repair and reasoning, invoke the language model within a tightly scoped prompt.

Once the profile is available, the routing phase evaluates the five risk components $c_1, \dots, c_5$ and computes the aggregate risk score $R(m,p)$ as a convex combination with the weight vector $\{w_i\}_{i=1}^{5}$, as shown in the second block of Algorithm~\ref{alg:ptr}. The routing map $\mathcal{Q}$ then assigns an execution mode $\rho \in \mathcal{R}_{\mathrm{mode}}$ by comparing $R$ against the thresholds $\theta_1$ and $\theta_2$. This computation is entirely deterministic: it involves five bounded evaluations, one dot product, and two comparisons, and therefore contributes negligible overhead. The significance of the routing decision is that it governs which adaptation mechanisms are active during execution. In pure mode, the execution loop runs without branch-rule evaluation, producing the lightest and most predictable path. In guarded mode, branch predicates are evaluated before each tool call, permitting local parameter adjustments within a fixed method family. In repair-eligible mode, the same guarded mechanism is active, but the run is additionally marked as structurally likely to benefit from repair if the verifier later detects degraded trust. Since the router is deterministic, the execution mode is uniquely determined once the profile object has been fixed, which is important for reproducibility and post hoc analysis.

The execution phase, shown in the central loop of Algorithm~\ref{alg:ptr}, realizes the operator composition $\mathcal{E}_{\ell} = \Phi_{\ell}^{\mathrm{store}} \circ \Phi_{\ell}^{\mathrm{recover}} \circ \Phi_{\ell}^{\mathrm{exec}} \circ \Phi_{\ell}^{\mathrm{branch}} \circ \Phi_{\ell}^{\mathrm{resolve}}$ as a sequential loop over the planned workflow steps. For each step $\ell$, the implementation first applies the auto-resolution operator $\Lambda_{\mathrm{auto}}$ to replace any symbolic parameters marked as \texttt{auto} with concrete values derived from the current state $s$. If the execution mode is not pure and a branch predicate $\varphi_j$ from the control descriptor set $B_r$ evaluates to $1$ on the current state, the corresponding parameter-modification map $\psi_j$ is applied, yielding the branched parameter object $u_{\ell}^{\mathrm{br}}$; otherwise the resolved parameters pass through unchanged. The tool transition $F_{\tau_{\ell}}$ is then invoked with $u_{\ell}^{\mathrm{br}}$ and the current state, producing both a tool output $v_{\ell}$ and an updated state. If the tool call fails with an error $\varepsilon_{\ell}$ and a matching recovery rule $\chi$ exists in the rule set $\Delta$, the framework retries the invocation with modified parameters up to $N_{\mathrm{rec}}$ times. Outputs are accumulated in the result store component of the state, with keys deduplicated by appending a sequential suffix when the same tool is invoked more than once. The entire execution loop is deterministic once the profile has been fixed: no language-model call is made between tool invocations. Mid-execution adaptation is confined to auto-resolution rules, which map prior results to concrete parameter values, and branch rules, which conditionally modify parameters based on evaluable state predicates. Both mechanisms are either pre-programmed in $\Delta$ or proposed by the profile operator and evaluated by a fixed grammar-based engine; they are therefore auditable, testable, and contribute no additional language-model cost.

After the execution loop terminates, the verification phase applies the verifier $\mathcal{V}$ to the final state $s_f$, the metadata $m$, and the profile $p$, as indicated in Algorithm~\ref{alg:ptr}. The verifier examines the execution trace for structural indicators of degradation --- failed steps, empty or thin tool outputs, recovery activations, and branch-rule firings --- and produces the verification object $z = (\kappa, \sigma, I, \eta, \xi)$. The trust score $\kappa$ is computed as a deterministic functional of the trace counters and penalty coefficients, and the repair recommendation indicator $\xi$ is set to $1$ if $\kappa$ falls below the repair threshold $\theta_{\mathrm{rep}}$ or if a hard failure was recorded. The verifier does not attempt to interpret the substantive content of tool outputs; it only assesses whether the execution path is structurally adequate to support the final reasoning stage. This separation is deliberate: trust estimation should not depend on, or be influenced by, narrative generation.

If the verification phase recommends repair, the framework invokes the repair operator $\mathcal{K}$ through a second language-model call, as shown in the conditional block of Algorithm~\ref{alg:ptr}. The prompt for this call includes the original task, the original profile, the execution trace, and the verification diagnostics. The language model is instructed to return a patched profile $p^{\sharp}$ containing a modified workflow and updated parameters, but is explicitly prohibited from generating a final answer. The patched workflow is then executed through the same deterministic loop described above, and the resulting state is re-verified. Crucially, the framework permits at most one repair cycle: if the re-verified trace still exhibits low trust, the system proceeds to the reasoning phase with the available evidence and appropriate caveats, rather than entering an iterative replanning loop. This boundedness guarantee is formalized in Proposition~\ref{prop:bounded-complexity}.

The procedure concludes with the reasoning phase, shown in the final block of Algorithm~\ref{alg:ptr}. The reasoning operator $\mathcal{G}$ is realized through the last language-model call, whose prompt contains the original task, the metadata, the final execution state $s_f$, and the verification object $z$ including any trust flags and diagnostic caveats. The language model is instructed to synthesize a report grounded exclusively in the realized evidence: substantive claims must reference tool outputs contained in the result store, trust caveats must be propagated whenever the verification status indicates reduced confidence, and no statistics or conclusions may be fabricated beyond what the execution trace supports. The final language-model call is therefore not a new planning stage, but an interpretation stage acting on a fixed verified trace.

Throughout the procedure, every language-model invocation is followed by a budget check, ensuring that the total cost does not exceed the prescribed limit $B$. The overall cost profile is thus predictable: most runs incur exactly two language-model calls, corresponding to the profile operator $\Pi$ and the reasoning operator $\mathcal{G}$, while runs that trigger repair incur a third call for $\mathcal{K}$. The deterministic phases --- routing, execution, and verification --- contribute negligible computational overhead relative to the language-model invocations, since they involve only rule evaluation, dictionary lookups, and tool calls.

The implementation described above makes explicit the structural difference between PTR and reactive agent architectures such as ReAct. In reactive frameworks, the language model is invoked after every tool observation to decide the next action, so the number of language-model calls scales linearly with the number of tool steps and is not bounded a priori. In PTR, the language model is invoked at most three times per task instance, regardless of workflow length. All adaptation between tool calls is handled by deterministic operators: auto-resolution rules replace symbolic parameters with concrete values derived from prior results, and branch rules conditionally modify parameters or skip steps based on evaluable state predicates. These mechanisms are evaluated by a fixed grammar-based engine rather than by open-ended language-model generation, and they are therefore auditable, testable, and free of additional language-model cost. The language model is re-engaged only when the verifier determines that the execution trace is structurally inadequate, and even then at most once. This separation of semantic synthesis from deterministic execution is the principal architectural distinction of the PTR framework.

\section{Experimental results}
\label{sec:experiments}

The purpose of the present section is to examine whether the structural properties of the Profile--Then--Reason framework established in the previous section yield corresponding empirical advantages in practice. The methodology proposed in this work is motivated by three principal considerations, namely bounded semantic complexity, deterministic or guarded execution, and selective bounded adaptation through repair. These properties suggest that PTR should be capable of reducing the computational burden associated with reactive agent trajectories, while preserving or improving answer quality in domains whose workflow class is sufficiently regular. However, such a claim cannot be justified on structural grounds alone. It must be evaluated against competitive baselines across tasks exhibiting different degrees of decomposability, retrieval dependence, and sensitivity to intermediate observations.

For this reason, the experimental study is designed not only to measure aggregate performance, but also to identify the regimes in which the proposed execution model is well aligned with the structure of the underlying task. In particular, the experiments seek to determine whether prior workflow synthesis is advantageous when the tool sequence is short, regular, and largely foreseeable, and conversely whether reactive execution remains preferable when intermediate observations fundamentally alter the reasoning trajectory. Thus, the objective is twofold. First, the study provides an empirical validation of the theoretical motivation for PTR by assessing its behavior in terms of accuracy, robustness, latency, and cost. Second, it serves to delimit the methodological scope of the framework by identifying the classes of problems for which bounded execution is an appropriate design principle and those for which open-ended observation-dependent reasoning retains a practical advantage.

The results reported below should therefore be interpreted as an analysis of architectural behavior rather than as a leaderboard-style comparison in isolation. More precisely, the comparison with ReAct is intended to reveal how different execution principles interact with task structure, model class, and tool dependence. In this sense, the experimental section complements the formal development of the previous section by testing whether the bounded execution hypothesis underlying PTR is reflected in measurable improvements at benchmark scale, and by clarifying the conditions under which such improvements can or cannot be expected.

\subsection{Methodological setup}

The objective of the experimental study is to assess whether the structural properties of PTR described in the previous section translate into measurable gains in accuracy, robustness, and cost efficiency relative to a standard ReAct baseline. The comparison is performed across six benchmarks chosen to span qualitatively distinct task regimes:

\begin{itemize}
\item \textbf{TriviaQA}~\cite{joshi2017triviaqa} is a reading comprehension benchmark containing over 650,000 question-answer-evidence triples authored by trivia enthusiasts. Questions feature complex, compositional phrasing that requires mapping creative lexical variations to the correct Wikipedia entity. Each question admits multiple valid answer aliases (e.g., ``NYC'', ``New York City'', ``New York''), and evaluation selects the best match across all accepted forms.

\item \textbf{NQ-Open}~\cite{kwiatkowski2019natural} consists of real queries issued to the Google search engine, paired with short answers extracted from Wikipedia articles. Unlike the polished trivia phrasing of TriviaQA, NQ-Open questions are colloquial and often ambiguous (e.g., ``when was the last time anyone was on the moon''), making the dataset a realistic test of open-domain question answering under real-world query distributions.

\item \textbf{StrategyQA}~\cite{geva2021aristotle} contains yes/no questions that require implicit multi-step reasoning. Each question demands decomposition into subquestions, information retrieval for each subquestion, and compositional reasoning over the results (e.g., ``Did Aristotle use a laptop?'' requires knowing when Aristotle lived and when laptops were invented). The dataset provides gold decomposition annotations, though agents must discover these steps independently.

\item \textbf{GSM8K}~\cite{cobbe2021training} is a dataset of approximately 8,800 linguistically diverse grade-school math word problems, each requiring two to eight steps of elementary arithmetic. Each solution includes natural-language chain-of-thought reasoning with a final numeric answer. Problems involve basic operations applied to real-world scenarios, testing multi-step mathematical reasoning rather than information retrieval.

\item \textbf{AQuA-RAT}~\cite{amini2019mathqa} comprises approximately 100,000 algebraic word problems presented in a five-way multiple-choice format (A--E), accompanied by natural-language rationales. Problems span algebra, probability, geometry, and number theory, requiring equation setup and symbolic manipulation beyond the elementary arithmetic of GSM8K. The multiple-choice format provides a $20\%$ random baseline but also allows models to verify answers against the given options.

\item \textbf{HotPotQA}~\cite{yang2018hotpotqa} is a multi-hop question answering dataset where each question requires reasoning over two or more Wikipedia articles. Questions are constructed so that the answer cannot be found in a single document: the agent must retrieve one article, extract a bridging entity, and then retrieve a second article to find the answer (e.g., ``What university did the director of Inception attend?'' requires chaining from the film to the director to their education).
\end{itemize}

For each dataset, $50$ questions were evaluated under four language models, namely GPT-4o-Mini, Claude Haiku 4.5, GPT-5.4, and Claude Sonnet 4.6, yielding $24$ model-dataset configurations and $48$ total agent runs. Both PTR and ReAct were evaluated with the same tool set, consisting of a Wikipedia search tool and a Wikipedia lookup tool, under deterministic decoding with temperature $0.0$. The ReAct baseline was allowed up to eight thought-action-observation iterations, whereas PTR was restricted to the bounded PROFILE $\rightarrow$ ROUTE $\rightarrow$ EXECUTE $\rightarrow$ VERIFY $\rightarrow$ (REPAIR) $\rightarrow$ REASON pipeline described in Section~\ref{sec:ptr-framework}.

\subsection{Evaluation metrics}

The primary accuracy metric is exact match (EM). For a given question with predicted answer $\hat{a}$ and gold answer $a^{*}$, the exact-match indicator is defined as
\begin{equation}
\mathrm{EM}(\hat{a}, a^{*}) =
\begin{cases}
1, & \text{if } \mathrm{norm}(\hat{a}) = \mathrm{norm}(a^{*}), \\
0, & \text{otherwise},
\end{cases}
\end{equation}
where $\mathrm{norm}(\cdot)$ denotes a benchmark-specific normalization function. For free-text benchmarks (HotPotQA, TriviaQA, NQ-Open), normalization consists of lowercasing, removal of articles, punctuation, and redundant whitespace. For TriviaQA and NQ-Open, the evaluation is alias-aware: the prediction is compared against all accepted answer strings, and a match with any alias is counted as correct. For StrategyQA, the answer is extracted as a binary yes/no label via pattern matching. For GSM8K, the answer is extracted as a numeric value with appropriate normalization (e.g., $18.0 = 18$). For AQuA-RAT, the answer is extracted as a single letter from A to E. The dataset-level exact-match score is the arithmetic mean of the per-question indicators over the evaluation set.

As a secondary accuracy metric, the token-level F1 score is reported for free-text benchmarks (HotPotQA, TriviaQA, NQ-Open). Let $P$ and $G$ denote the multisets of word tokens in the normalized prediction and gold answer, respectively. The token-level precision, recall, and F1 are defined as
\begin{equation}
\mathrm{prec} = \frac{|P \cap G|}{|P|}, \qquad
\mathrm{rec} = \frac{|P \cap G|}{|G|}, \qquad
\mathrm{F1} = \frac{2 \cdot \mathrm{prec} \cdot \mathrm{rec}}{\mathrm{prec} + \mathrm{rec}},
\end{equation}
with the convention $\mathrm{F1} = 0$ when both precision and recall are zero. For benchmarks with binary, numeric, or multiple-choice evaluation (StrategyQA, GSM8K, AQuA-RAT), the F1 score coincides with exact match and is therefore not reported separately.

Efficiency is measured along four dimensions: the total number of language-model calls per evaluation run, the total number of input and output tokens consumed, the total cost in US dollars computed from provider-specific pricing, and the mean wall-clock latency per question.

For pairwise comparison between frameworks, three derived quantities are used. The exact-match difference $\Delta\mathrm{EM} = \mathrm{EM}_{\mathrm{PTR}} - \mathrm{EM}_{\mathrm{ReAct}}$ measures the accuracy gap, with positive values indicating a PTR advantage. The cost ratio $\mathrm{EM}_{\mathrm{ReAct\text{-}cost}} / \mathrm{EM}_{\mathrm{PTR\text{-}cost}}$ measures relative efficiency, with values greater than one indicating that PTR is less expensive. For a given model-dataset configuration, the framework that achieves the strictly higher exact-match score is said to hold the \textbf{pairwise advantage} for that configuration. This criterion is used throughout as the primary unit of comparison.

Several methodological limitations should be noted. Each configuration was evaluated in a single run with a fixed random seed ($\mathrm{seed} = 42$) and no repeated trials. Consequently, no variance estimates or confidence intervals are reported. With $50$ questions per configuration, exact-match differences below approximately $0.10$ may not be statistically significant under standard binomial tests. The present analysis therefore relies on the consistency of observed patterns across multiple benchmarks and models, rather than on per-configuration significance tests.

\subsection{Results}

Table~\ref{tab:overall-results} reports the exact-match results for all model-dataset pairs. At the most aggregate level, PTR holds the pairwise advantage in $16$ of the $24$ configurations ($67\%$). This result indicates that the proposed bounded execution strategy is not merely competitive in isolated settings, but is a viable default configuration across a heterogeneous benchmark suite. However, the results also show that the superiority of one framework over the other is not uniform across task classes. Rather, the dominant explanatory variable is the task structure itself. In particular, PTR exhibits a clear advantage on retrieval-centered tasks, whereas ReAct retains an advantage on tasks that require iterative search refinement or unconstrained symbolic reasoning. 

\begin{table}[H]
\centering
\caption{Exact-match results for PTR and ReAct across all datasets and models. The column Adv.\ indicates which framework holds the pairwise advantage for each configuration.}
\label{tab:overall-results}
\begin{tabular}{llcccc}
\toprule
Dataset & Model & PTR & ReAct & Adv. \\
\midrule
TriviaQA & GPT-4o-Mini & 0.660 & 0.160 & PTR \\
TriviaQA & Claude Haiku 4.5 & 0.760 & 0.580 & PTR \\
TriviaQA & GPT-5.4 & 0.780 & 0.700 & PTR \\
TriviaQA & Claude Sonnet 4.6 & 0.820 & 0.520 & PTR \\
\midrule
NQ-Open & GPT-4o-Mini & 0.260 & 0.060 & PTR \\
NQ-Open & Claude Haiku 4.5 & 0.300 & 0.020 & PTR \\
NQ-Open & GPT-5.4 & 0.469 & 0.160 & PTR \\
NQ-Open & Claude Sonnet 4.6 & 0.380 & 0.000 & PTR \\
\midrule
StrategyQA & GPT-4o-Mini & 0.580 & 0.360 & PTR \\
StrategyQA & Claude Haiku 4.5 & 0.720 & 0.780 & ReAct \\
StrategyQA & GPT-5.4 & 0.820 & 0.780 & PTR \\
StrategyQA & Claude Sonnet 4.6 & 0.800 & 0.720 & PTR \\
\midrule
GSM8K & GPT-4o-Mini & 0.837 & 0.380 & PTR \\
GSM8K & Claude Haiku 4.5 & 0.960 & 0.900 & PTR \\
GSM8K & GPT-5.4 & 0.660 & 0.860 & ReAct \\
GSM8K & Claude Sonnet 4.6 & 0.980 & 0.780 & PTR \\
\midrule
AQuA-RAT & GPT-4o-Mini & 0.180 & 0.061 & PTR \\
AQuA-RAT & Claude Haiku 4.5 & 0.860 & 0.880 & ReAct \\
AQuA-RAT & GPT-5.4 & 0.320 & 0.780 & ReAct \\
AQuA-RAT & Claude Sonnet 4.6 & 0.780 & 0.900 & ReAct \\
\midrule
HotPotQA & GPT-4o-Mini & 0.120 & 0.080 & PTR \\
HotPotQA & Claude Haiku 4.5 & 0.020 & 0.265 & ReAct \\
HotPotQA & GPT-5.4 & 0.122 & 0.360 & ReAct \\
HotPotQA & Claude Sonnet 4.6 & 0.080 & 0.167 & ReAct \\
\bottomrule
\end{tabular}
\end{table}

A clearer picture emerges when the results are grouped by task family. On factual retrieval benchmarks, PTR holds the pairwise advantage consistently. On TriviaQA, PTR outperforms ReAct in all four model configurations, with exact-match gains ranging from $0.08$ to $0.50$ and an average advantage of $0.265$. On NQ-Open, PTR again holds the advantage in all four configurations, with an average advantage of $0.292$. These two datasets are precisely those for which the assumptions underlying PTR are most strongly satisfied: the task can usually be decomposed into a compact search plan, the relevant tool sequence is short, and intermediate observations do not typically require conceptual replanning. In such cases, the PROFILE stage is sufficient to identify a viable retrieval strategy, and the deterministic execution layer prevents the repeated reformulation loops that are frequently observed under ReAct. It is therefore reasonable to interpret the superiority of PTR on these datasets as evidence that bounded planning is particularly well matched to tool-dependent factoid retrieval. 

StrategyQA occupies an intermediate position. This dataset requires implicit decomposition rather than simple one-shot retrieval, since a yes-or-no answer often depends on several latent subquestions. Nevertheless, PTR holds the pairwise advantage in three of the four model configurations, with an average exact-match advantage of $0.070$. This result is noteworthy because it shows that PTR is not limited to trivial lookup tasks. Rather, the PROFILE stage is often capable of generating a sufficiently accurate decomposition in advance, provided that the resulting substeps remain structurally simple. In this sense, the results on StrategyQA support the claim that the planning advantage of PTR extends beyond single-hop retrieval, at least when the decomposition can be expressed as a short finite workflow. 

The results on GSM8K are more subtle. PTR holds the pairwise advantage in three of the four model configurations and exhibits an average exact-match advantage of $0.129$. At first sight, this may appear surprising, since GSM8K is primarily a reasoning benchmark rather than a retrieval benchmark. However, the observed behavior is consistent with the interpretation of PTR as a bounded semantic scaffold. In the stronger GSM8K configurations, the REASON stage is able to solve the arithmetic task even when the intermediate tool interactions are of limited utility. Thus, the PROFILE and VERIFY stages do not necessarily contribute to mathematical reasoning directly, but they do not prevent successful solution in most cases. It is worth mentioning, however, that this gain in accuracy is accompanied by a cost penalty: on GSM8K, PTR is more expensive than ReAct because the tool-planning overhead is not compensated by any substantial retrieval benefit. Therefore, the results on GSM8K should be interpreted as evidence of robustness rather than of architectural optimality for pure arithmetic tasks. 

In contrast, AQuA-RAT and HotPotQA reveal the principal limitations of PTR. On AQuA-RAT, ReAct holds the pairwise advantage in three of the four model configurations, with an average advantage of $0.120$. This behavior is consistent with the hypothesis that structured JSON planning may constrain models whose performance depends on freer symbolic manipulation. AQuA-RAT contains algebraic multiple-choice problems in which the dominant challenge is not tool selection but the maintenance of a coherent reasoning trajectory over a sequence of symbolic steps. In that regime, the bounded and highly structured PROFILE representation appears to interfere with rather than support the strongest models. On HotPotQA, ReAct again holds the advantage in three of the four configurations, with an average advantage of $0.120$. Here the explanation is different. HotPotQA requires genuine multi-hop retrieval, in which the correct second query often depends on an entity discovered only after the first query has been executed. This task violates one of the central premises of PTR, namely that the workflow can be usefully planned before execution begins. Since ReAct can revise its search strategy after each observation, it is naturally better suited to this benchmark. These two datasets therefore delimit the methodological scope of the proposed framework: PTR is effective when the workflow is structurally compressible, but not when the solution path requires substantial mid-execution semantic adaptation. 

The model-wise analysis further reinforces the importance of task structure. GPT-4o-Mini is the most striking case: PTR holds the pairwise advantage on all six benchmarks for this model. This suggests that the structured planning scaffold provided by PTR can compensate, to a considerable extent, for limited standalone tool-use ability. In contrast, GPT-5.4 and Claude Haiku 4.5 exhibit an even split, with three pairwise advantages each for PTR and ReAct, and Claude Sonnet 4.6 shows four advantages for PTR and two for ReAct. These results indicate that model capability alone does not determine the preferred agent architecture. Rather, the interaction between model class and task structure is decisive. On retrieval-centered tasks, even weaker models benefit strongly from the bounded PTR scaffold, whereas on algebraic or iterative multi-hop tasks stronger models can exploit the flexibility of ReAct more effectively. Thus, task type appears to be a more reliable predictor of the advantaged architecture than model size or nominal capability.

The cost analysis is also informative. Across all experiments, PTR incurred a total cost of \$6.29, compared with \$7.12 for ReAct, corresponding to an overall reduction of approximately $12\%$. This reduction is not uniform across datasets. PTR is cheaper on NQ-Open, StrategyQA, and HotPotQA, whereas ReAct is cheaper on TriviaQA, GSM8K, and AQuA-RAT. The most important point is not the sign of the difference on every individual dataset, but the fact that PTR achieves lower total cost while also holding the pairwise advantage in a majority of the configurations. This observation is consistent with the theoretical boundedness established in Section~\ref{sec:ptr-framework}. Since PTR restricts the number of language-model calls to two in the common case and to three when repair is triggered, its semantic complexity is controlled a priori. By contrast, the number of ReAct iterations depends on the realized trajectory and may increase without a corresponding increase in answer quality. Therefore, the cost results provide empirical support for the claim that bounded semantic complexity is not merely a formal property, but has practical consequences at benchmark scale. 
To summarize the benchmark evidence more compactly, Table~\ref{tab:dataset-summary} and Figure~\ref{fig:delta-em} group the results by dataset and report the corresponding average exact-match differences. Both representations make clear that PTR is the superior default choice on tool-dependent factual retrieval and on implicit decomposition tasks, whereas ReAct remains preferable when success depends on open-ended symbolic reasoning or on search trajectories that must be refined online.

\begin{table}[H]
\centering
\caption{Dataset-level comparison between PTR and ReAct. Pairwise advantage counts indicate the number of model configurations in which each framework achieves the strictly higher exact-match score. The average advantage is computed as PTR EM minus ReAct EM, averaged over the four models.}
\label{tab:dataset-summary}
\begin{tabular}{lccc}
\toprule
Dataset & PTR adv. & ReAct adv. & Avg. $\Delta$EM \\
\midrule
TriviaQA & 4/4 & 0/4 & +0.265 \\
NQ-Open & 4/4 & 0/4 & +0.292 \\
StrategyQA & 3/4 & 1/4 & +0.070 \\
GSM8K & 3/4 & 1/4 & +0.129 \\
AQuA-RAT & 1/4 & 3/4 & -0.120 \\
HotPotQA & 1/4 & 3/4 & -0.120 \\
\bottomrule
\end{tabular}
\end{table}

\begin{figure}[t]
\centering
\includegraphics[width=0.92\linewidth]{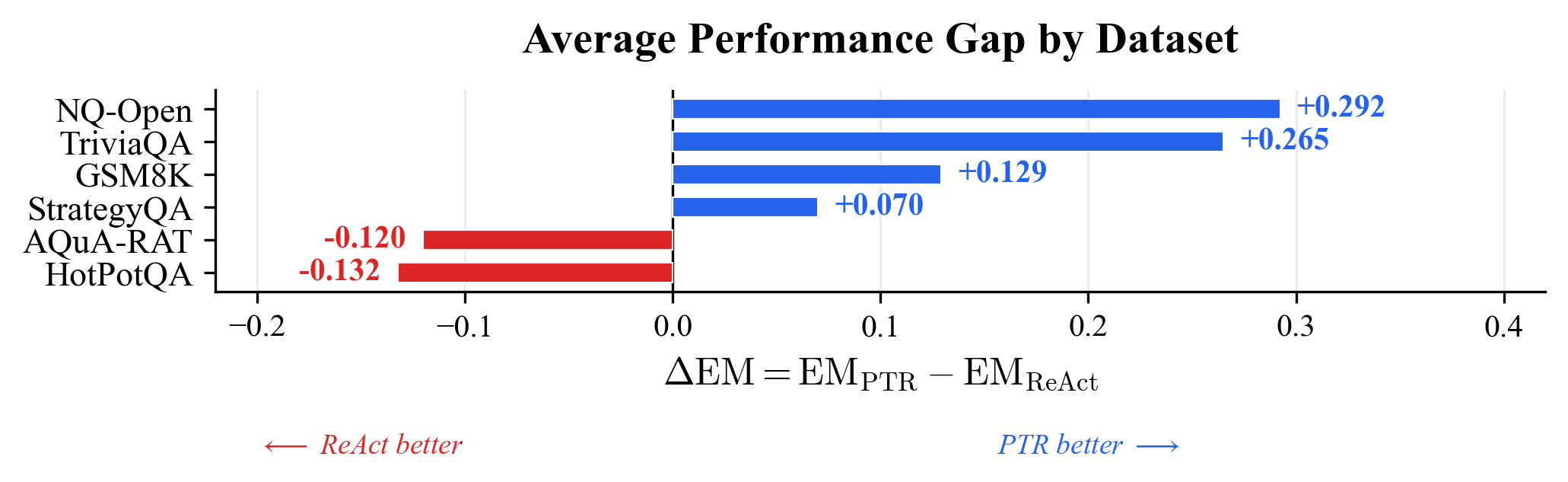}
\caption{Dataset-level average exact-match difference $\Delta \mathrm{EM} = \mathrm{EM}_{\mathrm{PTR}} - \mathrm{EM}_{\mathrm{ReAct}}$, averaged over the four evaluated language models. Positive values indicate a PTR advantage, whereas negative values indicate a ReAct advantage. PTR is favored on retrieval-centered and decomposition-heavy tasks (NQ-Open, TriviaQA, GSM8K, StrategyQA), while ReAct retains an advantage on AQuA-RAT and HotPotQA, where success depends more strongly on symbolic flexibility or online search refinement.}
\label{fig:delta-em}
\end{figure}

Several conclusions follow from these experiments. First, the results support the central design premise of PTR: when the admissible workflow class is sufficiently regular, it is advantageous to synthesize the workflow once, execute it deterministically, and reserve semantic interpretation for the final stage. Second, the benefit of this design is task-dependent rather than universal. The framework is particularly effective for retrieval-dominant problems and for decomposition tasks that can be expressed as a short finite plan. Third, the experiments also expose a meaningful limitation: when later reasoning steps must be conditioned on previously unseen intermediate observations, a reactive architecture retains an advantage. This is consistent with the formal structure of PTR, which deliberately trades open-ended mid-execution adaptation for bounded semantic complexity. Therefore, the empirical evidence should not be interpreted as showing that PTR supersedes ReAct in all settings. Rather, it shows that PTR is a strong default architecture over a broad and practically important subset of tool-augmented tasks. 

A final observation concerns the implications for future architecture design. The results suggest that the choice between PTR and ReAct should itself be treated as a routing problem. In particular, the empirical evidence indicates that retrieval-centered tasks and many decomposition tasks should be directed to PTR, whereas hard algebraic reasoning and genuine multi-hop retrieval should be directed to a more reactive policy. In this sense, the present results motivate not only the use of PTR as a standalone framework, but also the design of higher-level meta-controllers capable of selecting between bounded and reactive execution regimes on the basis of task structure. 

\section{Conclusion}
\label{sec:conclusion}

This work introduced Profile--Then--Reason, a bounded execution framework for tool-augmented language agents in structured domains. The proposed formulation was motivated by the observation that, in many high-structure tasks, repeated observation-dependent reasoning is not necessary at every execution step. Instead, the semantic burden of the agent can often be concentrated in an initial workflow-synthesis stage and a final interpretation stage, while the intermediate execution process is delegated to deterministic or guarded operators. In this sense, PTR was designed to separate semantic planning from runtime mechanics, to bound the number of language-model invocations a priori, and to retain a controlled form of adaptivity through verification and single-step repair.

From a methodological standpoint, the principal contribution of the framework is the explicit decomposition of agent execution into the PROFILE, ROUTE, EXECUTE, VERIFY, optional REPAIR, and REASON stages. This decomposition makes it possible to state the execution process as a composition of well-defined operators on structured task, metadata, workflow, and state spaces. The resulting formulation is computationally attractive because it admits bounded semantic complexity, deterministic downstream execution once the profile has been fixed, and a transparent separation between local deterministic adaptation and global semantic replanning. These properties do not by themselves guarantee semantic correctness, but they do establish computability, auditability, bounded execution depth, and a principled basis for implementation.

The empirical results support the central design hypothesis of the framework. Across the evaluated benchmarks, PTR showed clear advantages on retrieval-centered and decomposition-heavy tasks, where the admissible workflow class is sufficiently regular to permit useful prior planning. In such settings, the bounded execution strategy frequently yielded higher exact-match performance together with lower aggregate cost than the ReAct baseline. At the same time, the experiments also clarified the limits of the approach. On tasks such as HotPotQA and AQuA-RAT, where success depends more strongly on online search refinement or unconstrained symbolic manipulation, the flexibility of reactive execution remained advantageous. Therefore, the results should not be interpreted as establishing PTR as a universal replacement for reactive agent architectures. Rather, they indicate that PTR is a strong default framework for a broad and practically important class of structured tool-augmented tasks.

Several directions for future work follow naturally from the present study. First, the current repair mechanism is deliberately bounded to a single semantic intervention. It would be of interest to investigate more refined repair policies that remain analytically controlled while improving robustness on borderline cases. Second, the risk and trust functionals used for routing and verification are currently hand-designed and deterministic. A more systematic study of their calibration, sensitivity, and domain transfer properties would strengthen the framework further. Third, the experimental evidence suggests that the choice between PTR and ReAct should itself be treated as a higher-level routing problem. This points toward hybrid architectures in which a meta-controller selects between bounded and reactive execution regimes on the basis of task structure and anticipated workflow regularity. Finally, it would be worthwhile to evaluate the framework in more domain-specific environments, particularly those involving structured data analysis, scientific workflows, and enterprise tool ecosystems, where prior workflow synthesis and deterministic execution are especially natural.

In summary, the present work argues that the design of language-agent architectures should not be framed solely as a choice between static prompting and fully reactive execution. There exists an intermediate regime in which workflows can be synthesized semantically, executed deterministically, and repaired only when execution evidence justifies doing so. PTR is an explicit formulation of this regime. Its theoretical boundedness properties and empirical behavior suggest that such architectures deserve further study as a principled alternative for structured tool-augmented reasoning.

\newpage

\bibliographystyle{unsrt}  


\newpage



\end{document}